\newtheorem{theorem}{Theorem}
\newtheorem{lemma}{Lemma}
\newtheorem{corollary}{Corollary}
\newtheorem{problem}{Problem}
\begin{document}

% \title{Efficient Constrained Beamforming Optimization: An Explained Projection Based Unsupervised Learning Approach}
\title{Reliable Projection Based Unsupervised Learning for Semi-Definite QCQP with Application of Beamforming Optimization}
\author{
Xiucheng Wang,~\IEEEmembership{Student Member,~IEEE,}
Qi Qiu,
Nan Cheng,~\IEEEmembership{Senior Member,~IEEE,}

\thanks{ }% <-this % stops a space
\thanks{
\par Xiucheng Wang, Nan Cheng are with the State Key Laboratory of ISN and School of Telecommunications Engineering, Xidian University, Xi’an 710071, China (e-mail: xcwang\_1@stu.xidian.edu.cn; dr.nan.cheng@ieee.org). \textit{Nan Cheng is the corresponding author}.
\par Qi Qiu is with the School of Telecommunications Engineering, Xidian University, Xi'an, 710071, China (e-mail: qiuqi@stu.xidian.edu.cn).
}

}
    
    \maketitle

\IEEEdisplaynontitleabstractindextext

\IEEEpeerreviewmaketitle

\begin{abstract}
In this paper, we investigate a special class of quadratic-constrained quadratic programming (QCQP) with semi-definite constraints. Traditionally, since such a problem is non-convex and N-hard, the neural network (NN) is regarded as a promising method to obtain a high-performing solution. However, due to the inherent prediction error, it is challenging to ensure all solution output by the NN is feasible. Although some existing methods propose some naive methods, they only focus on reducing the constraint violation probability, where not all solutions are feasibly guaranteed. To deal with the above challenge, in this paper a computing efficient and reliable projection is proposed, where all solution output by the NN are ensured to be feasible. Moreover, unsupervised learning is used, so the NN can be trained effectively and efficiently without labels. Theoretically, the solution of the NN after projection is proven to be feasible, and we also prove the projection method can enhance the convergence performance and speed of the NN. To evaluate our proposed method, the quality of service (QoS)-contained beamforming scenario is studied, where the simulation results show the proposed method can achieve high-performance which is competitive with the lower bound.
\end{abstract}

\begin{IEEEkeywords}
constrain optimization, MU-MISO beamforming, reliable project, unsupervised learning.

\end{IEEEkeywords}

\section{Introduction}

With the development of 6G networks towards service-driven paradigms that prioritize user preferences, the optimization process in these networks faces an inevitable challenge of solving constraint optimization (CO) problems based on personalized user preferences and service performance constraints \cite{6g}. While traditional iterative methods have demonstrated success in solving problems with precise convex constraints or providing approximate solutions for simple non-convex constraints, their iterative process often involves computationally intensive operations such as computing Hessian matrices or inverses \cite{li2014special,he2022qcqp}. As a result, they fail to address real-time CO problems that require timely solutions, such as beamforming design issues with user rate preference constraints. This limitation undermines their applicability in time-sensitive applications. To overcome these challenges, researchers have turned to computationally efficient deep neural networks (NN), employing the learning to optimize (L2O) approach for solving CO problems. However, the utilization of L2O is not without flaws \cite{wang2023scalable, wang2022joint}. The black-box nature of NNs, their data dependencies, deterministic parameters, and scaling law pose ongoing challenges in terms of efficiency, constraint handling, performance, and training costs that researchers cannot overlook

While the universal approximation theorem suggests that NNs have the capability to model any mapping, thus theoretically implying the feasibility of designing an NN that can project input optimization problems into a feasible solution space, this assertion assumes the use of infinitely wide networks, which is impractical in real-world wireless network systems. When employing finite-width NNs, mapping errors are inevitable, leading to outputs that lie outside the feasible region \cite{goodfellow2016deep}. Consequently, existing methods that utilize L2O for solving CO problems often struggle to achieve satisfactory performance while maintaining low computational complexity. Even conventional approaches, such as constructing penalty functions associated with constraints to solve CO problems through unconstrained Lagrangian dual functions, fail to ensure computational efficiency \cite{shen2020graph,wang2023scalable,wang2022joint}. This is due to the increased complexity of the Lagrangian dual functions compared to the original CO problem's objective functions, which in turn makes it more challenging to fit NNs. Moreover, these methods do not guarantee the feasibility of constraints as non-convex Lagrangian dual functions may lead NN to fall into local minima outside the feasible region. Existing feasible L2O methods, such as those based on sampling, require repetitive sampling within a fixed sampling space until a feasible point is found for the final output. However, this approach is computationally inefficient and limited to fixed constraint equations, unable to handle variable constraints. Such limitations contradict the diverse service requirements and personalized development goals of 6G networks, where user demands are dynamic and not always identical.

Hence, it seems to increasingly necessary to enhance the model's representational capacity to ensure the reliability of mappings from optimization problems to feasible solution spaces. As a result, NN architectures employed for wireless network beamforming optimization have evolved into more complex structures, transitioning from simple Multilayer Perceptrons (MLPs) \cite{cao2023pcnet} to Convolutional NNs (CNNs) \cite{wang2023cnn}, Long Short-Term Memory (LSTM) networks, and further to Graph NNs (GNNs) and Transformers \cite{shen2020graph, li2022radionet}. These advanced architectures possess unprecedented representational capabilities, enabling them to effectively tackle CO problems in radio frequency resource management. However, indiscriminate model size expansion offers minimal benefits for solving CO problems while introducing significant drawbacks. Larger models typically lead to longer inference delays, which conflicts with the trend of reducing the air interface transmission time interval (TTI) in the transition from 4G to 5G and toward future 6G networks. The reduction of TTI aims to enhance the agility and real-time capabilities of wireless radio frequency management \cite{gu2021knowledge}. Consequently, the adoption of oversized models results in excessive inference delays, making their practical application in wireless network radio frequency resource management challenging. Furthermore, even when expanding the model parameters to the scale of hundreds of billions, there is no guarantee of producing constraint-compliant outputs. This is analogous to how even GPT-4 may occasionally generate sentences that do not adhere to grammatical and spelling rules.

Are we then left with no recourse but to accept the possibility of L2O methods yielding solutions that violate constraints when solving CO problems? Upon examining the existing research on L2O for CO in wireless networks, it is surprising to discover that optimization problems with one-dimensional linear summation constraints, such as transmission rate optimization under power constraints or task size-constrained edge computing task offloading, are rarely treated as CO problems. Researchers often assume that the constraints can be satisfied, and experimental results also demonstrate no violations of constraints. This phenomenon can be attributed to the utilization of modern NN architectures, such as TensorFlow or PyTorch, which incorporate softmax functions. These functions enable the projection of NN outputs into a solution space that sums to one, easily meeting any first-order linear summation constraint. Could we, therefore, devise an ingenious projection function that maps the solutions of beamforming optimization problems into a feasible region, satisfying all user rate preferences and compelling any NN output to comply with the constraints?

However, there are instances where finding a feasible point within the region of the CO problem proves to be as challenging as solving the original problem itself. Therefore, the proposed projection function must exhibit computational efficiency and avoid becoming a bottleneck for L2O efficiency. Additionally, the projection function should possess the following essential properties: (1) \textbf{Initial Point Dependency and Determinism} : This necessitates that when a point lies outside the feasible region, its projected point within the feasible region should be related to the original point's coordinates, rather than projecting to a random feasible point. Failure to establish this relationship would hinder the NN's ability to learn the connection between its outputs and the projected points, thereby impeding the optimization of the CO problem. Ideally, the projection function should be a bijection, enabling the NN to comprehend how its outputs, even when outside the feasible region, can influence the objective function of the CO problem; (2) \textbf{Performance Preservation}: The projection function should not compromise the solution performance of the CO problem. This implies that the set of solutions after projection should include the subset of the feasible region containing the optimal points of the original CO problem. If the set of projected solutions does not encompass the optimal points, the mapping would diminish the upper bound of system performance; (3) \textbf{Performance Enhancement}: Ideally, the mapping function should also enhance the probability of randomly sampling the global or local optimum of the original CO problem within the mapping interval.

Although a projection function incorporating the aforementioned properties can ensure constraint compliance in solving CO problems, the data-driven L2O approach still necessitates extensive training data for optimal performance. However, generating a substantial volume of high-quality solutions as labels for supervised training is exceedingly time-consuming in the case of CO problems, particularly the NP-hard beamforming design under user rate preference constraints. Acquiring a sufficient number of training labels within an acceptable timeframe to train NNs is impractical. Furthermore, despite having an ample supply of labels, NNs may not surpass the performance of iterative algorithms, as the essence of NNs, according to the Universal Approximation Theorem, lies in fitting high-dimensional mappings. Although reinforcement learning-based methods can alleviate the reliance on optimal labels, their protracted convergence times and unstable performance render them unsuitable for solving complex CO problems. To address these challenges, this paper proposes a differentiable, initial point deterministic, performance-guaranteed, and L-Lipschitz continuous projection function. This function enables the reliable, rapid, and high-performance solution of beamforming optimization problems constrained by user transmission rate demands. By explicitly deriving gradients of NN parameters with respect to projection points and optimization objectives, a label-free, unsupervised training approach is implemented for swift NN solution. The primary contributions of this paper are as follows.
\begin{enumerate}
    \item To our best knowledge, in this paper, for the first time, we propose a method that can make all solutions satisfy the constraint condition when using L2O to solve a beamforming constrained optimization problem with user rate preference. Instead of focusing only on the probability that the constraint is satisfied as in the previous work, in other words, we propose a method that can make the constraint satisfy the probability micro 100\%.
    \item Through an analysis of the Karush-Kuhn-Tucker (KKT) conditions for the NP-hard beamforming constraint optimization problem constrained by user transmission rate requirements, we propose a performance-guaranteed projection function. This function projects NN outputs into the feasible region, ensuring that the projected outputs satisfy specific KKT conditions, thereby guaranteeing their post-projection performance.
    \item We theoretically and qualitatively analyze that utilizing our proposed projection function can accelerate training convergence while reducing the representational demands on the NN. This allows for the use of simple MLP with only one hidden layer to solve the beamforming constraint optimization problem constrained by user transmission rate requirements.
    \item Leveraging the differentiability of our proposed projection function, we explicitly derive the gradients of NN parameters with respect to the optimization objectives. This enables training the NN in a label-free, unsupervised manner, significantly reducing training costs.
    \item Simulation results demonstrate that the method proposed in this paper achieves performance close to the theoretical lower bound within a short timeframe.
\end{enumerate}

\section{System Model and Optimality Analysis}
In this paper, we consider a single-cell, multiuser multiple-input single-output (MISO) downlink system. The system comprises a Base Station (BS) equipped with $ N $ antennas, serving $ M $ single-antenna users within its cell. This setup is designed to cater to the escalating demand for high-data-rate applications, driven by the widespread adoption of wireless devices and the proliferation of media-rich content. The wireless channel, characterized by its inherent unpredictability due to multipath propagation, is represented by the channel vector $\bm{h}_i \in \mathbb{C}^N$ for the $ i $-th user. This vector captures the complex gains between the BS antennas and the receiver of the user. To facilitate transmission, a beamforming vector $ \bm{w} $ is employed, which requires careful design to manage transmitted power while ensuring quality of service (QoS) for each user, as measured by the signal-to-noise ratio (SNR).

The design of the beamforming vector is formulated as an optimization problem with the objective of minimizing the transmit power $ \| \bm{w} \|^2 $, subject to individual SNR constraints for each user. These constraints ensure that the received power $ \|\bm{h}_i^H \bm{w}\|^2 $ normalized by the noise variance $ \sigma_i^2 $ at the receiver exceeds a predetermined threshold $ \gamma_i $, guaranteeing a minimum SNR for reliable communication. This optimization problem reflects the multicast nature of the communication, where common information is transmitted to all users simultaneously. Hence, the problem can be formulated as follows.
\begin{problem}\label{p1}
\begin{align}
& \min_{\bm{w} \in \mathbb{C}^N}
& & \| \bm{w} \|^2 \\
& \text{s. t.}
& & \frac{\|\bm{h}_i^H \bm{w}\|^2}{\sigma_i^2} \geq \gamma_i, \; \forall i\in \{1, \cdots, M\},
\end{align}
\end{problem}
    
\begin{theorem}\label{theorem-1}
    The Problem \ref{p1} is a Non-deterministic Polynomial (NP)-hard problem.
\end{theorem}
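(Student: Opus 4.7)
The plan is to prove Theorem \ref{theorem-1} by exhibiting a polynomial-time reduction from a known NP-hard problem. Problem \ref{p1} is a homogeneous, indefinite, rank-one QCQP: the objective $\|\bm{w}\|^2$ is convex quadratic in $\bm{w}$, while each constraint matrix $\bm{h}_i \bm{h}_i^H / \sigma_i^2$ is positive semi-definite and the inequality points the ``wrong way'', producing a non-convex feasibility region. Such problems are known to be NP-hard in general, and the natural source problem I would reduce from is MAX-CUT, or equivalently the $\pm 1$-valued feasibility problem for a homogeneous quadratic system, both of which are classical NP-hard problems and have been used in prior hardness proofs for QoS-constrained multicast beamforming.

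First, I would cast Problem \ref{p1} into its real-valued equivalent by stacking real and imaginary parts of $\bm{w}$ and $\bm{h}_i$ into a $2N$-dimensional real vector, turning the problem into a real homogeneous QCQP; this simplifies the bookkeeping in the reduction. Second, given an instance of MAX-CUT on $M$ vertices specified by an edge-weight matrix, I would construct a beamforming instance with $N$ proportional to $M$, channel vectors $\bm{h}_i$ obtained from a factorization of the MAX-CUT weight matrix, and thresholds $\gamma_i$ together with noise variances $\sigma_i^2$ chosen so that any feasible $\bm{w}$ with $\|\bm{w}\|^2$ below a specific budget $B$ corresponds to a cut of at least a specified weight. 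A polynomial-time algorithm for Problem \ref{p1} could then be used inside a binary search on $B$ to solve MAX-CUT, yielding the desired reduction.

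The main obstacle is showing that the rank-one, positive semi-definite constraint matrices are rich enough to encode a genuinely combinatorial problem: individually, each $\ge$ constraint is easy, and all the hardness must come from the simultaneous interaction of the $M$ constraints together with the power-minimization objective. To avoid re-deriving the most delicate parts of the combinatorial construction from scratch, I would appeal to the classical result of Sidiropoulos, Davidson and Luo on NP-hardness of the QoS-constrained multicast beamforming problem, which proceeds by reduction from a homogeneous quadratic $\pm 1$-feasibility problem. The remaining work is therefore mostly bookkeeping: verify that Problem \ref{p1} coincides with their formulation up to notational changes (power minimization under per-user SNR constraints with a shared beamformer $\bm{w}$), and conclude that Problem \ref{p1} inherits NP-hardness.
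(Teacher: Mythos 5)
Your proposal is correct and ultimately takes the same route as the paper: both invoke the Sidiropoulos--Davidson--Luo NP-hardness result for QoS-constrained multicast beamforming and simply verify that Problem \ref{p1} matches that formulation (the paper does so explicitly via $\bm{A}_0=\bm{I}$, $\bm{A}_i=-\bm{h}_i\bm{h}_i^H$, $c_i=-\sigma_i^2\gamma_i$). The MAX-CUT reduction you sketch at the outset is never actually carried out and is not needed once you appeal to that citation.
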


\begin{proof}
    According to \cite{sidiropoulos2006transmit} problems of the following form are known to be NP-hard
\begin{align*}
&  \min_{\bm{x} \in \mathbb{C}^N}
& & \bm{x}^H \bm{A}_0 \bm{x}\\
& s. t.
& & \bm{x}^H \bm{A}_{i} \bm{x}\leq c_i, \; \forall i\in \{1, \cdots, M\},
\end{align*}
The Problem \ref{p1} reformulated into the above form by setting $\bm{A}_0=\bm{I}$, $\bm{A}_i=-\bm{h}_i\bm{h}^{H}_i$, and $c_i=-\sigma_i\gamma_i$, for $\forall i\in \{1, \cdots, M\}$, where $\bm{I}$ denotes the identity matrix. Consequently, Problem \ref{p1} is classified as an NP-hard problem.
\end{proof}

The NP-hard nature of the optimization problem, as established previously, poses a significant challenge in terms of effective and efficient resolution using traditional convex optimization methods. This challenge becomes even more pronounced in the context of user on-demand services within 6G wireless networks, where low latency and computational efficiency are crucial. The pursuit of ubiquitous responsiveness in the 6G ecosystem, aiming to address user demands promptly and accurately, is hindered by the latency introduced by conventional optimization techniques. Therefore, it is imperative to conduct a detailed analysis of the problem's characteristics to identify an effective method for its solution. Fortunately, through further analysis, we can derive certain properties that can serve as the foundation for designing a powerful approach to address the problem.

\begin{theorem}\label{theorem-2}
    The optimal solution to Problem \ref{p1} must be on the boundary of the feasible region
\end{theorem}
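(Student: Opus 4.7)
The plan is to argue by contradiction using a simple homogeneity/scaling argument, exploiting the quadratic nature of both the objective and the constraint functions in $\bm{w}$. Specifically, I would suppose for contradiction that there exists an optimal $\bm{w}^\star$ lying in the strict interior of the feasible region, i.e.\ $\tfrac{\|\bm{h}_i^H \bm{w}^\star\|^2}{\sigma_i^2} > \gamma_i$ for every $i\in\{1,\dots,M\}$, and then construct a strictly better feasible point by shrinking $\bm{w}^\star$.

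The concrete construction is to set $\bm{w}' = \alpha \bm{w}^\star$ for a scalar $\alpha\in(0,1)$. Because both sides transform quadratically in $\alpha$, the constraints become $\alpha^{2}\tfrac{\|\bm{h}_i^H \bm{w}^\star\|^{2}}{\sigma_i^{2}} \geq \gamma_i$, while the objective becomes $\alpha^{2}\|\bm{w}^\star\|^{2}$. I would then choose the smallest admissible shrink factor
\begin{equation*}
\alpha^{2} \;=\; \max_{i\in\{1,\dots,M\}}\; \frac{\gamma_i\,\sigma_i^{2}}{\|\bm{h}_i^H \bm{w}^\star\|^{2}}.
\end{equation*}
By the interior assumption, every ratio on the right is strictly less than $1$, so $\alpha<1$; at the same time this choice keeps every constraint satisfied and makes at least one of them tight for $\bm{w}'$. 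Consequently $\|\bm{w}'\|^{2} = \alpha^{2}\|\bm{w}^\star\|^{2} < \|\bm{w}^\star\|^{2}$, contradicting optimality of $\bm{w}^\star$. Therefore any optimal solution must have at least one QoS constraint active, which is precisely what it means to lie on the boundary of the feasible region.

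There is really no hard step here; the argument is essentially the standard observation that when the objective is strictly increasing in $\|\bm{w}\|$ and the constraints are homogeneous of the same degree in $\bm{w}$, one can always trim until a constraint bites. The only point that deserves a sentence of care is the handling of the case $\bm{w}^\star=\bm{0}$: this cannot be feasible because it would force $0\geq\gamma_i\sigma_i^{2}>0$, so the scaling $\alpha\bm{w}^\star$ is well-defined and the denominators $\|\bm{h}_i^H\bm{w}^\star\|^{2}$ used in choosing $\alpha$ are positive (any $\bm{h}_i^H\bm{w}^\star=0$ would already violate feasibility). With that remark the proof is complete in a few lines.
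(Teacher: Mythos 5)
Your proof is correct, and it takes a genuinely different route from the paper. The paper argues through the KKT system: it writes the Lagrangian, observes that if every multiplier $\lambda_i$ were zero the stationarity condition would force $\bm{w}=\bm{0}$, which contradicts primal feasibility, and then invokes complementary slackness to conclude that at least one constraint is active. Your argument instead works directly with the homogeneity of the objective and constraints: assume a strictly interior optimum, shrink by the factor $\alpha = \max_i \sqrt{\gamma_i\sigma_i^2/\|\bm{h}_i^H\bm{w}^\star\|^2} < 1$, and obtain a feasible point with strictly smaller norm, a contradiction. Your version is more elementary and in one respect more robust: the paper's route implicitly assumes the KKT conditions are necessary at the optimum, which for this non-convex problem requires a constraint qualification that the paper never verifies, whereas your scaling argument needs nothing beyond positivity of the thresholds $\gamma_i$ (an assumption the paper's proof also uses tacitly when it rules out $\bm{w}=\bm{0}$). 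Your construction also has the pleasant side effect of anticipating the paper's own projection function \eqref{projection-2}: the shrink factor you choose is exactly the scalar $t$ used there, so your proof doubles as the argument behind Corollary \ref{corollary-2} that projecting an already-feasible point onto the boundary can only improve the objective. What the paper's KKT route buys in exchange is the explicit multiplier structure, which it reuses to motivate the projection design; your route buys brevity and fewer unstated hypotheses.
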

\begin{proof}
The Lagrangian dual function $ \mathcal{L} $ for Problem \ref{p1} can formulated as follows.
\begin{align}
    \mathcal{L}(\bm{w}, \bm{\lambda}) = \|\bm{w}\|^2 - \sum_{i=1}^M \lambda_i \left( \gamma_i - \frac{\|\bm{h}_i^H \bm{w}\|^2}{\sigma_i^2} \right),\label{kkt}
\end{align}
where $ \bm{w} $ signifies the beamforming vector under optimization, $ \bm{\lambda} = [\lambda_1, \lambda_2, \cdots, \lambda_M]^T $ encapsulates the non-negative Lagrange multipliers corresponding to the constraints, $ \gamma_i $ denotes the SNR requisites, $ \sigma_i^2 $ is the noise variance for the $ i $-th user, and $ \bm{h}_i $ represents the channel vector from the BS to the $ i $-th user. As elucidated by \cite{boyd2004convex}, the optimal solution is characterized by several critical properties, articulated as follows.
\begin{align*}
   &\nabla_{\bm{w}} \mathcal{L}(\bm{w}, \bm{\lambda}) = 2\bm{w} - \sum_{i=1}^M \lambda_i \left( \frac{2\bm{h}_i \bm{h}_i^H}{\sigma_i^2} \right) \bm{w} = \bm{0},\tag{\ref{kkt}a}\label{kkt-1}\\
   &\lambda_i \geq 0, \quad \forall i \in \{1, \cdots, M\},\tag{\ref{kkt}b}\label{kkt-2}\\
   &\lambda_i \left( \gamma_i - \frac{\|\bm{h}_i^H \bm{w}\|^2}{\sigma_i^2} \right) = 0, \quad \forall i \in \{1, \cdots, M\},\tag{\ref{kkt}c}\label{kkt-3}\\
   &\frac{\|\bm{h}_i^H \bm{w}\|^2}{\sigma_i^2} \geq \gamma_i, \quad \forall i\in \{1, \cdots, M\}.\tag{\ref{kkt}d}\label{kkt-4}
\end{align*}
The complementary slackness condition, as is encapsulated in equation \eqref{kkt-3}, intimates that for the optimal solution, each Lagrange multiplier $ \lambda_i $ is either zero or the constraint $\gamma_i -\frac{\|\bm {h}_i^H\bm{w}\|^2}{\sigma_i^2} = 0$ is active. Nonetheless, were all $ \lambda_i $ to be zero, a reformulation of the stationarity condition would yield as follows.
\begin{align}
    \nabla_{\bm{w}} \mathcal{L}(\bm{w}, \bm{\lambda}) = 2\bm{w},\tag{\ref{kkt}e}\label{kkt-5}
\end{align}
As established in \cite{boyd2004convex}, the gradient of the Lagrangian dual function with respect to the optimal solution $ \bm{w}^* $ ought to be zero. Nonetheless, an examination of the function \eqref{kkt-5} reveals that its gradient with respect to $ \bm{w} $ zeroifies exclusively when $ \bm{w} = \bm{0} $. Such a condition, $ \bm{w} = \bm{0} $, cannot be reconciled with the primal feasibility condition as expressed in \eqref{kkt-4}. Consequently, it is necessitated that the optimal solution involves at least one Lagrange multiplier $ \lambda_i $ differing from zero, indicating the presence of at least one index $ \forall i\in \{1, \cdots, M\} $ for which the constraint $ \gamma_i - \frac{\|\bm{h}_i^H \bm{w}\|^2}{\sigma_i^2} = 0 $ holds true. It therefore follows that the optimal solution to Problem \ref{p1} invariably lies on the boundary of the feasible region.
\end{proof}

\begin{figure*}
    \centering
    \includegraphics[width=0.9\linewidth]{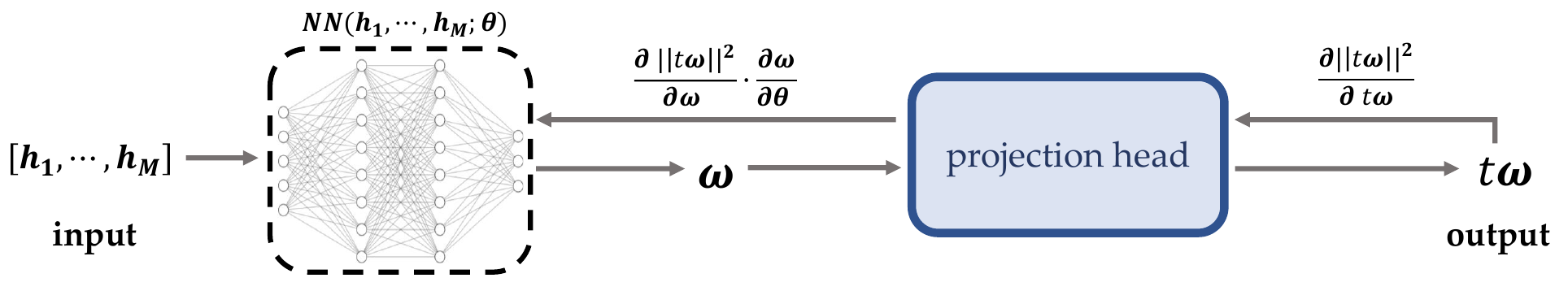}
    \caption{Inference and training procedure.}
    \label{fig:system_inference}
\end{figure*}

\begin{figure*}
    \centering
    \includegraphics[width=0.9\linewidth]{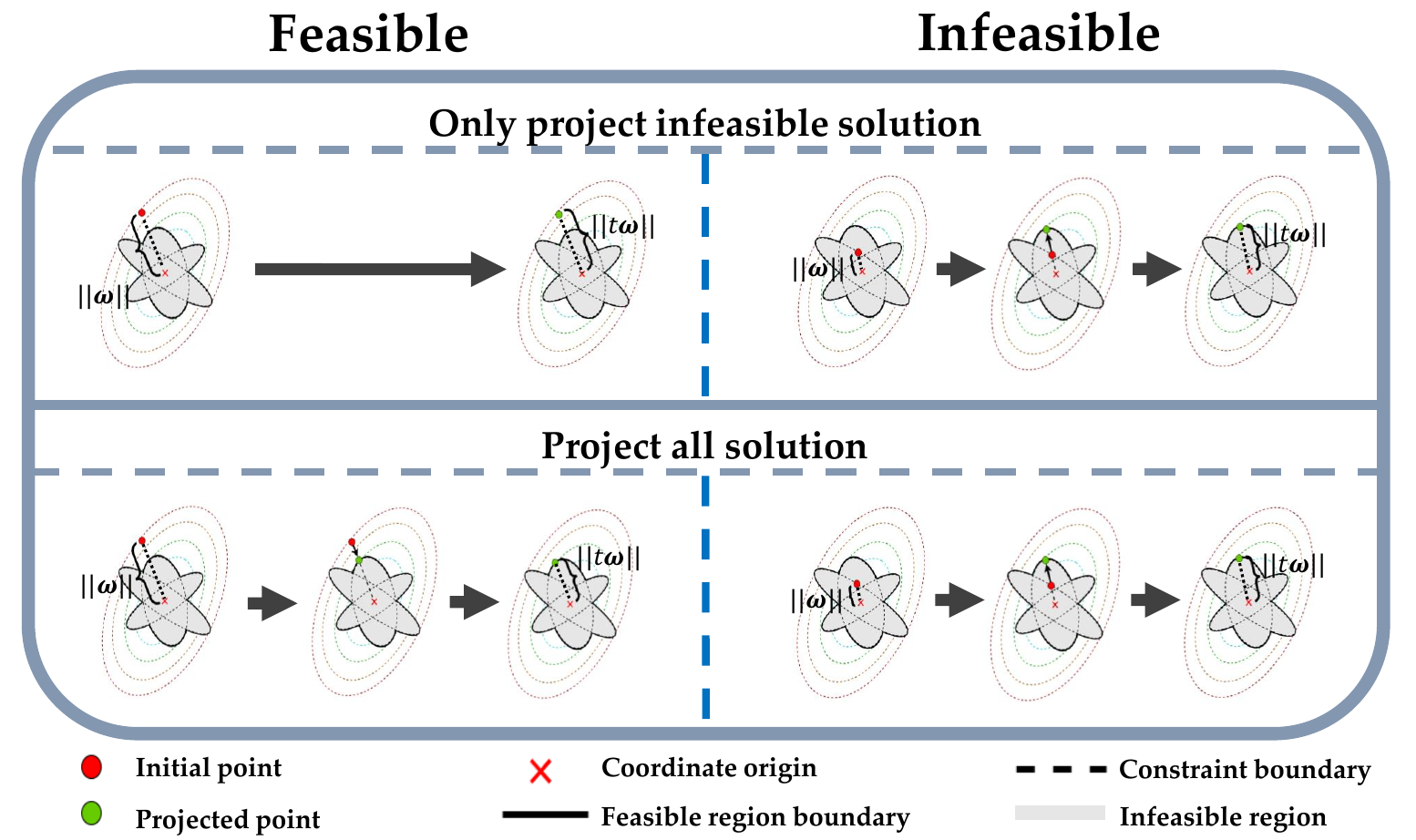}
    \caption{Illustration of projection.}
    \label{fig:projection}
\end{figure*}

\section{Reliable projection Enhanced Neural Network Method}
\subsection{Reliable Projection Function for Constrained MISO Beamforming}
As Theorem \ref{theorem-1} highlights, Problem \ref{p1} is NP-hard, suggesting that obtaining an optimal solution within polynomial time is unfeasible. Concurrently, the rapid advancements in NN technology provide a promising avenue for tackling NP-hard challenges. In the context of CO, employing NN-based strategies essentially involves training the network to approximate a function that maps the parameters of optimization and constraint equations directly to the optimal feasible solutions. The theoretical underpinning for this approach is supported by the universal approximation theorem, which asserts that a neural network with sufficient width can approximate any function to any desired degree of accuracy. However, practical constraints prevent the deployment of infinitely wide networks in real-world communication systems, leading inevitably to approximation errors as per the universal approximation theorem. Consequently, the output from a neural network may not always reside within the feasible region, nor can it guarantee inherently optimal solutions. Additionally, the opaque nature of neural networks complicates the analysis of conditions under which the network outputs points within the feasible region or the quality of the solutions it produces. To mitigate these issues, integrating a feature embedding module that compels the neural network to yield high-quality outputs within the feasible region becomes essential. Alternatively, employing a post-processing module to map potentially unreliable NN outputs back into the feasible region can ensure the performance quality of learning-to-optimize (L2O) schemes. Drawing inspiration from the successful application of softmax mapping in CO problems involving first-order linear sum constraints, a robust projection function has been devised, as depicted in Fig.~\ref{fig:system_inference}. This function is designed to facilitate a reliable CO solution leveraging neural network capabilities. The foundational design of this projection function is predicated on specific properties of Problems \ref{p1}, as initially delineated.

\begin{lemma}\label{lemma-1}
Given any non-zero vector $ \bm{w} $, there exists a scalar $ t $ such that the vector scaled by $ t $, expressed as $ t\bm{w} $, adheres to all constraints delineated in Problem \ref{p1}.
\end{lemma}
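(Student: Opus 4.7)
The plan is to exploit the positive homogeneity of the SNR constraints under scaling. I would begin by substituting $t\bm{w}$ into the $i$-th constraint of Problem \ref{p1} and pulling the scalar out of the modulus, which yields $|t|^2 \|\bm{h}_i^H \bm{w}\|^2/\sigma_i^2 \geq \gamma_i$. Assuming $\bm{h}_i^H \bm{w} \neq 0$, this rearranges to a single lower bound on $|t|$, namely $|t|^2 \geq \gamma_i \sigma_i^2/\|\bm{h}_i^H \bm{w}\|^2$. Since there are only $M$ such scalar inequalities, they can be simultaneously satisfied by taking
\begin{equation*}
t^\star \;=\; \sqrt{\,\max_{i\in\{1,\dots,M\}} \frac{\gamma_i \sigma_i^2}{\|\bm{h}_i^H \bm{w}\|^2}\,},
\end{equation*}
and any $t$ with $|t| \geq t^\star$ then makes $t\bm{w}$ feasible. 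Explicitly recording the minimal $t^\star$ is worthwhile because it forces at least one SNR constraint to be active, which aligns with Theorem \ref{theorem-2} and foreshadows that this scaling will preserve (and in fact enforce) the boundary characterization of the optimum later used to justify the projection.

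The main obstacle is the degenerate case in which $\bm{h}_i^H \bm{w} = 0$ for some user $i$: no rescaling of $\bm{w}$ along its own direction can produce positive received power for a user whose channel is orthogonal to $\bm{w}$. I would address this in one of two ways. First, restrict the lemma's domain to vectors $\bm{w}$ that are not orthogonal to any $\bm{h}_i$; since the channels $\{\bm{h}_i\}$ are a finite collection of hyperplanes in $\mathbb{C}^N$, their union has Lebesgue measure zero, so a neural-network output drawn from any absolutely continuous distribution lies outside this set almost surely. Alternatively, one can read the statement as asserting existence of $t$ in the extended real sense and simply exclude this measure-zero configuration. Either framing is enough to make the projection in Fig.~\ref{fig:projection} well defined with probability one, which is the only thing subsequent results actually require.

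Operationally the proof is a one-line verification after the substitution, so I would keep the write-up short: introduce $t^\star$, check feasibility componentwise, and flag the non-orthogonality caveat. The real payoff of the lemma is structural rather than technical, in that it legitimises the rescaling-based projection used throughout Section III, so I would conclude by remarking that $t^\star$ depends continuously (indeed, differentiably almost everywhere) on $\bm{w}$ and on the channel parameters, setting up the differentiability claims invoked later for gradient-based unsupervised training.
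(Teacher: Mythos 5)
Your proposal follows essentially the same route as the paper: both exploit the degree-two homogeneity of the SNR constraints in $\bm{w}$ and scale up until all thresholds are met. The paper argues non-constructively (each $f_i(t)=\alpha_i t^2$ grows without bound as $t\to+\infty$, so a sufficiently large $t$ exists), whereas you exhibit the minimal scaling $t^\star=\sqrt{\max_i \gamma_i\sigma_i^2/\|\bm{h}_i^H\bm{w}\|^2}$ explicitly, which anticipates the projection function \eqref{projection-2} used later. Your treatment of the degenerate case is in fact more careful than the paper's: the paper asserts that $\bm{w}\neq\bm{0}$ implies $\alpha_i=\|\bm{h}_i^H\bm{w}\|^2/\sigma_i^2>0$ for all $i$, which is false when $\bm{w}$ is orthogonal to some channel $\bm{h}_i$; in that case no rescaling of $\bm{w}$ can be feasible, so the lemma as stated needs exactly the non-orthogonality caveat (or measure-zero exclusion) you flag. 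So your write-up is correct and, on this point, repairs a gap in the paper's own argument rather than introducing one.
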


\begin{proof}
Consider the function 
\begin{align*}
    f_{i}(t) = \frac{\| \bm{h}_{i} t\bm{w} \|^2}{\sigma_{i}^2} = t^2 \frac{\| \bm{h}_{i} \bm{w} \|^2}{\sigma_{i}^2} = \alpha_i t^2, \quad \forall i\in \{1, \cdots, M\},
\end{align*}
where $ \alpha_i = \frac{\| \bm{h}_{i} \bm{w} \|^2}{\sigma_{i}^2} $ and by the premise $ \bm{w} \neq \bm{0} $, we have $ \alpha_i > 0 $ for all $ i $. As $ t $ approaches infinity $ t \rightarrow +\infty $, each $ f_{i}(t) $ unboundedly increases. Consequently, there must exist a sufficiently large $ t $ such that for all $ i $, the functions $ f_{i}(t) $ exceed the prescribed SNR thresholds $ \gamma_i $. 
\end{proof}

Leveraging Lemma \ref{lemma-1}, a preliminary approach could entail predefining a sufficiently large scalar $ \hat{t} $ and, upon determining that the neural network (NN) output vector $ \bm{w} $ fails to satisfy the constraints, simply scaling $ \bm{w} $ by $ \hat{t} $ to enforce compliance. While this technique guarantees constraint adherence for the NN's output, it suffers from two critical deficiencies. Initially, an excessive scale factor may inflate the magnitude of $ \|\hat{t}\bm{w}\|^2 $, detrimentally impacting the resolution of Problem \ref{p1}. Moreover, the determination of $ \hat{t} $ is not contingent on the NN's current input but rather on the statistical properties of $ \bm{h} $, rendering it unrelated to the NN's output. This approach is epitomized by a rudimentary step function:
\begin{align*}
    t=
    \begin{cases}
    1 & \text{if } \|\bm{h}_i^H \bm{w}\|^2/\sigma_i^2 \geq \gamma_i, \; \forall i \in \{1, \cdots, M\}, \\
    \hat{t} & \text{otherwise},
    \end{cases}
\end{align*}
In essence, for an NN predicated on backpropagation to refine its parameters and performance, discerning the association between the output, the scaled solution, and the optimization objective is non-trivial with such a step function. This is attributed to its non-differentiable nature or its zero-derivative characteristic, thus precluding the use of this elementary mechanism to simultaneously guarantee constraint satisfaction and optimal solution quality.

Inspired by Theorem \ref{theorem-2}, we propose a projection function, simple in construction yet robust in application as follows.
\begin{align}
    t =  \begin{cases}
    1   \qquad\qquad\qquad\quad \|\bm{h}_i^H \bm{w}\|^2/\sigma_i^2 \geq \gamma_i, \forall i \in \{1, \cdots, M\}, \\
    \max_{i \in \{1, \cdots, M\}} \sqrt{\frac{\gamma_{i}\sigma_{i}^{2}}{\|\bm{h}_{i}^{H}\bm{w}\|^2}} \qquad\qquad\qquad\qquad  \text{otherwise},
    \end{cases}
    \label{projection-1}
\end{align}

\begin{corollary}
Given any non-zero vectors $ \bm{h} $ and $ \bm{w} $, the projection function \eqref{projection-1} ensures that all constraints are satisfied upon scaling $ \bm{w} $ by $ t $ to yield $ t\bm{w} $.
\end{corollary}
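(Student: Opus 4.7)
The plan is to proceed by a two-case analysis that mirrors the two branches of the piecewise definition of $t$ in \eqref{projection-1}. In the first branch, $t=1$ and the hypothesis $\|\bm{h}_i^H\bm{w}\|^2/\sigma_i^2 \geq \gamma_i$ for all $i$ is precisely the statement that $t\bm{w}=\bm{w}$ is feasible, so nothing needs to be shown.

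The substantive work lies in the \emph{otherwise} branch, where $t = \max_{i} \sqrt{\gamma_i \sigma_i^2 / \|\bm{h}_i^H \bm{w}\|^2}$. I would fix an arbitrary index $j \in \{1,\ldots,M\}$ and verify the $j$-th constraint on $t\bm{w}$ directly. Since the max over $i$ in particular dominates the $j$-th term, we have $t \geq \sqrt{\gamma_j \sigma_j^2 / \|\bm{h}_j^H \bm{w}\|^2}$. Both sides are non-negative, so squaring preserves the inequality, and pulling $t^2$ inside the squared norm gives $\|\bm{h}_j^H(t\bm{w})\|^2 = t^2 \|\bm{h}_j^H \bm{w}\|^2 \geq \gamma_j \sigma_j^2$, which after dividing by $\sigma_j^2$ is exactly the required QoS inequality for user $j$. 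Because $j$ was arbitrary, the constraint holds for all users simultaneously, establishing the corollary.

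There is essentially no obstacle in the inequality manipulation itself; the only point that warrants care is the well-definedness of $t$ in the second branch, since the expression $\gamma_i\sigma_i^2/\|\bm{h}_i^H\bm{w}\|^2$ requires $\bm{h}_i^H\bm{w} \neq 0$. I would note that this is precisely the non-degeneracy condition $\alpha_i > 0$ already invoked in the proof of Lemma \ref{lemma-1}: if $\bm{h}_i^H\bm{w} = 0$ for some $i$ with $\gamma_i > 0$, then no finite scaling of $\bm{w}$ could render user $i$ feasible, and the scaling approach is inapplicable regardless of how $t$ is defined. Under the standing assumption that $\bm{h}_i^H\bm{w}\neq 0$, the max is taken over finite positive quantities and the argument above goes through in a single line of algebra per user.
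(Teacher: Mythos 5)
Your proposal is correct and follows essentially the same route as the paper: in the non-trivial branch, use $t \geq \sqrt{\gamma_i\sigma_i^2/\|\bm{h}_i^H\bm{w}\|^2}$ for each $i$, square, and multiply through to get $\|\bm{h}_i^H(t\bm{w})\|^2 \geq \gamma_i\sigma_i^2$. Your added remark on the well-definedness of $t$ (requiring $\bm{h}_i^H\bm{w}\neq 0$) is a point the paper glosses over via the $\alpha_i>0$ assumption in Lemma \ref{lemma-1}, and it is a worthwhile clarification, but it does not change the argument.
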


\begin{proof}
Should any constraints be unmet, vector $ \bm{w} $ will be scaled by $ t $ to become $ \bm{x} = t\bm{w} $. Consequently, we can express:

\begin{align*}
    \|\bm{h}_{i}^{H}\bm{x}\|^2 &= \|\bm{h}_{i}^{H}t\bm{w}\|^2 = t^2\|\bm{h}_{i}^{H}\bm{w}\|^2,
\end{align*}
since $ t $ is defined as $ \max_{i\in\{1, \cdots, M\}}\sqrt{\frac{\gamma_{i}\sigma_{i}^{2}}{\|\bm{h}_{i}^{H}\bm{w}\|^2}} $, ensuring $ t^2 \geq \frac{\gamma_{i}\sigma_{i}^{2}}{\|\bm{h}_{i}^{H}\bm{w}\|^2} $ for all $ i $ in $ \{1, \cdots, M\} $. Therefore, the following function can be obtained.
\begin{align*}
    \|\bm{h}_{i}^{H}\bm{x}\|^2 &= t^2\|\bm{h}_{i}^{H}\bm{w}\|^2 \\
    &\geq \frac{\gamma_{i}\sigma_{i}^{2}}{\|\bm{h}_{i}^{H}\bm{w}\|^2}\|\bm{h}_{i}^{H}\bm{w}\|^2, \\
    &= \gamma_{i}\sigma_{i}^{2},
\end{align*}
thus all constraints can be satisfied.
\end{proof}

While the analysis validates that the projection function \eqref{projection-1} effectively project all infeasible points into the feasible region, an enhanced formulation, as described below, can optimize problem-solving performance:
\begin{align}
    t =  \max_{i \in \{1, \cdots, M\}} \sqrt{\frac{\gamma_{i}\sigma_{i}^{2}}{\|\bm{h}_{i}^{H}\bm{w}\|^2}}.
    \label{projection-2}
\end{align}
This modified projection function \eqref{projection-2} has the following properties.

\begin{corollary}\label{corollary-2}
Given any non-zero vectors $\bm{h}$ and $\bm{w}$, the projection function \eqref{projection-2} not only projects any infeasible solution into the feasible region but also enhances the performance of an initial solution that is already feasible.
\end{corollary}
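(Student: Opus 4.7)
The plan is to decompose the claim into its two assertions and handle them separately, noting that the second one is really what is new relative to the previous corollary. First I would show feasibility for an arbitrary (possibly infeasible) input $\bm{w}$ by repeating the scaling argument already used for \eqref{projection-1}: since $t^2 = \max_i \gamma_i\sigma_i^2/\|\bm{h}_i^H\bm{w}\|^2 \geq \gamma_j\sigma_j^2/\|\bm{h}_j^H\bm{w}\|^2$ for every $j$, the inequality $\|\bm{h}_j^H(t\bm{w})\|^2 = t^2\|\bm{h}_j^H\bm{w}\|^2 \geq \gamma_j\sigma_j^2$ follows immediately, so $t\bm{w}$ satisfies all QoS constraints of Problem \ref{p1}.

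The substantive part is the performance-enhancement claim. Here I would assume $\bm{w}$ is already feasible, i.e.\ $\|\bm{h}_i^H\bm{w}\|^2/\sigma_i^2 \geq \gamma_i$ for every $i$. Rearranging gives $\gamma_i\sigma_i^2/\|\bm{h}_i^H\bm{w}\|^2 \leq 1$ for each $i$, so the maximum over $i$ also lies in $(0,1]$, which yields $t \leq 1$. Therefore $\|t\bm{w}\|^2 = t^2\|\bm{w}\|^2 \leq \|\bm{w}\|^2$, i.e.\ the transmit-power objective of Problem \ref{p1} is non-increasing under the projection. Strict improvement happens precisely when $\bm{w}$ is in the interior, since then $t<1$ and $t^2\|\bm{w}\|^2 < \|\bm{w}\|^2$.

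I would then tie this back to Theorem \ref{theorem-2} to give a structural reason why the inequality is \emph{the right} one: by construction, the index $i^\star$ achieving the maximum in \eqref{projection-2} satisfies $\|\bm{h}_{i^\star}^H(t\bm{w})\|^2/\sigma_{i^\star}^2 = \gamma_{i^\star}$, so $t\bm{w}$ lies on the boundary of the feasible region, which is exactly where the optimum must live. Thus the projection not only shrinks the objective but also moves any strictly-feasible iterate onto the boundary stratum identified as optimal in Theorem \ref{theorem-2}, consistent with the KKT complementary-slackness condition \eqref{kkt-3}.

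I do not expect a significant obstacle: the argument is a direct consequence of the definition of $t$ once feasibility of $\bm{w}$ is used to bound $t \leq 1$. The only subtlety worth stating carefully is that $\bm{w}\neq\bm{0}$ (so every denominator $\|\bm{h}_i^H\bm{w}\|^2$ in \eqref{projection-2} is positive whenever $\bm{h}_i\neq\bm{0}$), which is assumed in the hypothesis and ensures $t$ is well-defined.
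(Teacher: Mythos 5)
Your proposal is correct and follows essentially the same route as the paper: bound $t\le 1$ using feasibility of $\bm{w}$, conclude $\|t\bm{w}\|^2 = t^2\|\bm{w}\|^2 \le \|\bm{w}\|^2$, and observe that the projected point activates the $\arg\max$ constraint, placing it on the boundary where Theorem \ref{theorem-2} locates the optimum. If anything, your justification that the remaining constraints stay satisfied (via $t^2 \ge \gamma_j\sigma_j^2/\|\bm{h}_j^H\bm{w}\|^2$ for every $j$) is cleaner than the paper's corresponding remark.
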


\begin{proof}
Assuming $\bm{w}$ is initially feasible, it follows that for all $i$ in $\{1, \ldots, M\}$, the inequality $\|\bm{h}_i^H \bm{w}\|^2 \geq \gamma_i \sigma_i^2$ holds true. Consequently, each term under the square root in the definition of $t$ is less than or equal to 1, implying $t \leq 1$. Scaling $\bm{w}$ by $t$ as follows
\begin{align*}
    \|\bm{h}_k^H (t \cdot \bm{w})\|^2 &= t^2 \|\bm{h}_k^H \bm{w}\|^2 \\
    &= \left(\sqrt{\frac{\gamma_k \sigma_k^2}{\|\bm{h}_k^H \bm{w}\|^2}}\right)^2 \|\bm{h}_k^H \bm{w}\|^2 \\
    &= \gamma_k \sigma_k^2,
\end{align*}
where $k = \arg \max_{i \in \{1, \ldots, M\}} \sqrt{\frac{\gamma_i \sigma_i^2}{\|\bm{h}_i^H \bm{w}\|^2}}$. This calculation demonstrates that the scaled vector $t \cdot \bm{w}$ precisely meets the $k$-th constraint at equality, defining the boundary of the feasible region for that constraint.

Furthermore, for all $j$ in $\{1, \ldots, M\} \setminus \{k\}$, given that $\gamma_k \sigma_k^2 \geq \gamma_j \sigma_j^2$, the $j$-th constraint remains satisfied. This proof not only affirms that the projection function \eqref{projection-2} universally ensures feasibility but also indicates that using this function can effectively narrow the search space for solutions by focusing on the boundary conditions defined by the constraints.

Additionally, if $\bm{w}$ initially satisfies all constraints, the fact that $t \leq 1$ leads to:
\begin{align*}
    \|t \bm{w}\|^2 = t^2 \|\bm{w}\|^2 \leq \|\bm{w}\|^2,
\end{align*}
illustrating that the projection function \eqref{projection-2} can potentially enhance the performance of a solution that is already feasible.
\end{proof}

Then the Problem \ref{p1} can be reformulated as follows.
\begin{problem}\label{p2}
\begin{align}
& \min_{\bm{w} \in \mathbb{C}^N}
& & \| t\bm{w} \|^2 \\
& \text{s. t.}
& & t = \max_{i\in\{1,\cdots,M\}}\sqrt{\frac{\gamma_{i}\sigma_{i}^{2}}{\|\bm{h}_{i}^{H}\bm{w}\|^{2}}},
\end{align}
\end{problem}

\begin{theorem}\label{theorem-3}
The Problem \ref{p1} and Problem \ref{p2} are equivalent.
\end{theorem}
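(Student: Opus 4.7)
The plan is to prove equivalence by showing that the two problems have the same optimal value and that their optimal solutions correspond under the scaling $\bm{w}\mapsto t(\bm{w})\bm{w}$. Since Problem~\ref{p2} is unconstrained (apart from the definition of $t$), I will treat it as minimizing $\|t(\bm{w})\bm{w}\|^{2}$ over all nonzero $\bm{w}\in\mathbb{C}^{N}$ and compare its optimal value with that of Problem~\ref{p1}.

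First I would establish the inequality $p_{1}^{\star}\leq p_{2}^{\star}$, where $p_{k}^{\star}$ denotes the optimal value of Problem~$k$. For any nonzero $\bm{w}$, Corollary~\ref{corollary-2} guarantees that $\bm{v}:=t(\bm{w})\bm{w}$ lies in the feasible region of Problem~\ref{p1}, and its cost in Problem~\ref{p1} is exactly $\|\bm{v}\|^{2}=\|t(\bm{w})\bm{w}\|^{2}$, which is the cost of $\bm{w}$ in Problem~\ref{p2}. Hence every Problem~\ref{p2} candidate induces an equally good Problem~\ref{p1} feasible point, giving $p_{1}^{\star}\leq p_{2}^{\star}$.

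Next I would establish the reverse inequality $p_{2}^{\star}\leq p_{1}^{\star}$ using Theorem~\ref{theorem-2}. Let $\bm{w}^{\star}$ be optimal for Problem~\ref{p1}. By Theorem~\ref{theorem-2}, at least one constraint is active, i.e.\ there exists $k$ with $\|\bm{h}_{k}^{H}\bm{w}^{\star}\|^{2}/\sigma_{k}^{2}=\gamma_{k}$, while for all other $i$ we have $\|\bm{h}_{i}^{H}\bm{w}^{\star}\|^{2}/\sigma_{i}^{2}\geq\gamma_{i}$. Then each ratio $\gamma_{i}\sigma_{i}^{2}/\|\bm{h}_{i}^{H}\bm{w}^{\star}\|^{2}$ is at most $1$, with equality at $i=k$, so $t(\bm{w}^{\star})=1$. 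Therefore $\bm{w}^{\star}$ itself is a candidate for Problem~\ref{p2} with the same objective value $\|\bm{w}^{\star}\|^{2}$, yielding $p_{2}^{\star}\leq\|\bm{w}^{\star}\|^{2}=p_{1}^{\star}$.

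Combining the two inequalities gives $p_{1}^{\star}=p_{2}^{\star}$, and the mapping $\bm{w}\mapsto t(\bm{w})\bm{w}$ carries optimal solutions of Problem~\ref{p2} to optimal solutions of Problem~\ref{p1}, while optimal solutions of Problem~\ref{p1} are already fixed points of this mapping. I do not anticipate any serious obstacle: the only delicate point is ruling out the degenerate case $\bm{w}=\bm{0}$ in Problem~\ref{p2}, which cannot be optimal since $\bm{w}=\bm{0}$ is infeasible for Problem~\ref{p1} and $t$ is undefined there; this is handled by noting, as in the proof of Theorem~\ref{theorem-2}, that any optimal $\bm{w}^{\star}$ of Problem~\ref{p1} must be nonzero, so restricting Problem~\ref{p2} to nonzero $\bm{w}$ loses nothing.
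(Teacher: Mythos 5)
Your proposal is correct, and it is actually more complete than the argument the paper gives. The paper's proof of Theorem~\ref{theorem-3} only verifies one direction: it checks that for any $\bm{w}$ the scaled point $\bm{x}=t\bm{w}$ satisfies all constraints of Problem~\ref{p1} with objective value $\|t\bm{w}\|^{2}$, which is exactly your first inequality $p_{1}^{\star}\leq p_{2}^{\star}$; it never addresses why Problem~\ref{p2} cannot have a strictly larger optimal value, so the claimed equivalence is not fully argued there. You close this gap with the reverse inequality $p_{2}^{\star}\leq p_{1}^{\star}$, invoking Theorem~\ref{theorem-2} to show that any Problem~\ref{p1} optimum $\bm{w}^{\star}$ has an active constraint, hence $t(\bm{w}^{\star})=1$ and $\bm{w}^{\star}$ is a fixed point of the projection attaining the same cost in Problem~\ref{p2}. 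A small remark: you do not even need the boundary theorem for this step, since Corollary~\ref{corollary-2} already gives $t(\bm{w})\leq 1$ for every feasible $\bm{w}$, so $\|t(\bm{w})\bm{w}\|^{2}\leq\|\bm{w}\|^{2}$ yields $p_{2}^{\star}\leq p_{1}^{\star}$ directly; your route via Theorem~\ref{theorem-2} buys the slightly stronger statement that Problem~\ref{p1} optima are literally unchanged by the projection, while the Corollary-based route is more elementary. Your handling of the degenerate point $\bm{w}=\bm{0}$ (excluded because it is infeasible for Problem~\ref{p1} and $t$ is undefined there) is also a detail the paper omits.
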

\begin{proof}
Let us consider the transformation $\bm{x} = t\bm{w}$, which allows us to reformulate the minimization problem as:
\begin{align*}
\min_{\bm{x}} \|\bm{x}\|^2 = \min_{\bm{w}} \|t\bm{w}\|^2.
\end{align*}
Here, $t$ is defined as $t = \max_{i \in \{1, \ldots, M\}} \sqrt{\frac{\gamma_i \sigma_i^2}{\|\bm{h}_i^H \bm{w}\|^2}}$, implying that:
\begin{align*}
t \geq \sqrt{\frac{\gamma_i \sigma_i^2}{\|\bm{h}_i^H \bm{w}\|^2}}, \quad \forall i \in \{1, \ldots, M\}.
\end{align*}
Squaring both sides of this inequality, we obtain:
\begin{align*}
t^2 \geq \frac{\gamma_i \sigma_i^2}{\|\bm{h}_i^H \bm{w}\|^2}, \quad \forall i \in \{1, \ldots, M\}.
\end{align*}
Consequently, the following inequality can be obtained.
\begin{align*}
\frac{\|\bm{h}_i^H \bm{x}\|^2}{\sigma_i^2} = \frac{\|\bm{h}_i^H t\bm{w}\|^2}{\sigma_i^2} = \frac{t^2 \|\bm{h}_i^H \bm{w}\|^2}{\sigma_i^2} \geq \gamma_i,
\end{align*}
demonstrating that scaling $\bm{w}$ by $t$ ensures all constraints of the form $\|\bm{h}_i^H \bm{w}\|^2/\sigma_i^2 \geq \gamma_i$ are satisfied, thereby confirming the feasibility of $\bm{x} = t\bm{w}$ in the context of the given optimization problem.
\end{proof}

\subsection{Projection-based L2O Method}
Although Problems \ref{p1} and \ref{p2} are equivalent, the transformation from Problem \ref{p1} to Problem \ref{p2} modifies the challenge from a constrained optimization problem concerning $\bm{w}$ into an unconstrained one. This adaptation enables the efficient and reliable application of NNs to solve the original problem. Additionally, it allows for a qualitative analysis of the differences in the complexity faced by NNs when tackling these problems. Firstly, projection function \eqref{projection-2} facilitates the NN's ability to project directly to the boundary of the feasible region, aligning along the vector from the origin, irrespective of the node output values. From a coordinate perspective, the NN needs only to determine a direction; projection function \eqref{projection-2} will then ascertain the optimal magnitude in that direction. Thus, when addressing Problems \ref{p1}, the NN is required to simultaneously identify both the angle and magnitude, whereas for Problems \ref{p2}, it needs only to discern the optimal angle. Invoking the universal approximation theorem, the neural complexity required to learn both the angle and magnitude in the solution space of Problems \ref{p1} surpasses that needed merely to learn the angle in Problems \ref{p2}. The latter scenario demands a simpler, lower-dimensional fitting function and thus fewer neurons for an adequate fit. Employing projection function \eqref{projection-2}, as proposed here, considerably reduces the structural complexity of the NN tasked with solving Problems \ref{p1}. Typically, the inferential complexity of an NN is directly associated with its structural complexity, which also diminishes the computational burden. Moreover, projection function \eqref{projection-2} significantly narrows the action space for the NN—from the entire feasible region to merely its boundary. According to \cite{xiao2022convergence}, the training convergence speed of an NN is linearly dependent on the size of the action space. Consequently, utilizing the proposed projection function not only expedites the inference process of the NN but also reduces the training overhead, enhancing overall computational efficiency.

Since MLP is the NN architecture with the simplest structure and the weakest characterization ability, and is most likely to be replaced by NNs with more complex structures such as CNN, GNN, and Transformer to improve performance. Therefore, in order to verify the performance and versatility of our proposed projection function, we use the simplest MLP with only one hidden layer as the NN architecture of this paper. Its specific architecture is as follows. Its input layer consists of $2NM$ neurons, which are used to extract the complex channel information $\bm{h}$ of $M$ users. The hidden layer has K neurons, and the output layer is $2N$ neurons, which are used to represent the beamforming vector with $N$ transmit antennas.

\begin{theorem}\label{theorem-4}
The inferencing complexity of the proposed projection-based neural network (NN) method for solving Problems \ref{p1} is $\mathcal{O}(NM)$.
\end{theorem}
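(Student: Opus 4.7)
The plan is to decompose one forward inference pass into two disjoint stages and bound the arithmetic cost of each: (i) the MLP forward pass that maps the channel input to a raw beamforming vector $\bm{w}$, and (ii) the projection step \eqref{projection-2} that rescales $\bm{w}$ to $t\bm{w}$. Since the architecture has been fixed in the preceding subsection (input size $2NM$, one hidden layer of $K$ neurons, output size $2N$), I would treat $K$ as a design-time constant independent of $N$ and $M$, consistent with the paper's claim that a simple single-hidden-layer MLP suffices thanks to the action-space reduction induced by the projection.

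For step (i), I would count the elementary operations of the two affine transformations and the activations. The input-to-hidden weight matrix has $2NM \cdot K$ entries and is applied by a matrix-vector product costing $\mathcal{O}(NMK)$; the hidden-to-output matrix costs $\mathcal{O}(NK)$; the bias additions and element-wise activations are absorbed into these terms. Under the convention $K=\mathcal{O}(1)$ this stage runs in $\mathcal{O}(NM)$ time. For step (ii), I would bound the projection by analyzing its three substeps separately: computing the $M$ inner products $\bm{h}_i^H\bm{w}$ costs $\mathcal{O}(N)$ each for a total of $\mathcal{O}(NM)$; forming the $M$ candidate values $\gamma_i\sigma_i^2/\|\bm{h}_i^H\bm{w}\|^2$ and taking the square root of their maximum costs $\mathcal{O}(M)$; and the final scaling $t\bm{w}$ requires $\mathcal{O}(N)$ multiplications. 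Summing these gives $\mathcal{O}(NM)$ for the projection.

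Putting the two stages together,
\begin{align*}
\mathcal{O}(NMK) + \mathcal{O}(NM) \;=\; \mathcal{O}(NM),
\end{align*}
under $K=\mathcal{O}(1)$, which yields the theorem. The main obstacle, and the only subtle point, is justifying that $K$ may be treated as a constant rather than a quantity scaling with $N$ or $M$. I would address this by pointing back to the qualitative argument made just before the theorem statement: because the projection function \eqref{projection-2} collapses the NN's effective action space from the full feasible region to its one-dimensional boundary ray through the origin, the MLP only has to learn a direction rather than a magnitude, so a fixed small $K$ suffices in practice and is the regime the complexity claim targets. Once this convention is stated explicitly, the rest of the proof reduces to routine operation counting, and no further delicate estimate is needed.
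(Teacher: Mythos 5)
Your proposal is correct and follows essentially the same route as the paper: count the per-layer cost of the single-hidden-layer MLP (input $2NM$, hidden width $K$ treated as a constant, output $2N$), add the cost of the projection step, and conclude $\mathcal{O}(NM)$. Your accounting is in fact slightly more careful than the paper's—you correctly charge $\mathcal{O}(NM)$ for the $M$ inner products $\bm{h}_i^H\bm{w}$ inside the projection, whereas the paper states only $\mathcal{O}(M)$ for that step, and you make the $K=\mathcal{O}(1)$ convention explicit rather than leaving it implicit—but neither difference changes the argument or the final bound.
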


\begin{proof}
As delineated by Goodfellow et al. \cite{goodfellow2016deep}, the computational complexity of the $i$-th layer in a neural network is characterized by:
\begin{align*}
\mathcal{M}_i = \zeta \phi_{\text{in}}\phi_{\text{out}} + \iota,
\end{align*}
where $\phi_{\text{in}}$ and $\phi_{\text{out}}$ represent the sizes of the input and output layers, respectively, while $\zeta$ and $\iota$ are coefficients reflecting the structural parameters of the NN. Given this, the inferencing complexity for the proposed NN architecture, incorporating $K$ intermediary layers between input layer $N$ and output layer $M$, is calculated as:
\begin{align*}
\zeta (NM \times K + K \times N) + 2\iota.
\end{align*}
Additionally, the computational effort required by the projection function is $\mathcal{O}(M)$, as it involves $M$ computations to ascertain the boundary of each constraint equation. Therefore, when combined, the overall inferencing complexity for addressing Problem \ref{p1} via the projection-based NN method is aggregated to $\mathcal{O}(NM)$.
\end{proof}
Theorem \ref{theorem-4} shows that the complexity of the projection-based NN method for solving Problem \ref{p1} is linear dependence with $N$ and $M$, respectively, which makes the method have good scalability.

\noindent\textbf{Remark 1}. We should highlights the distinct advantages of our proposed method over traditional Learning to Optimize (L2O) approaches that utilize penalty functions and constraint projection methods. Initially, employing a penalty function to train a neural network (NN) may reduce the likelihood of L2O solutions violating constraints. However, this approach cannot guarantee constraint compliance for all outputs. In contrast, our projection-based method ensures that every output from the NN strictly adheres to the constraints. Furthermore, the conventional constraint projection method achieves compliance by resolving a constrained optimization problem that seeks the closest point within the feasible region to the current point as the projection target. Yet, the complexity of solving this optimization problem often parallels that of addressing the original problem, as it involves the same constraints. This similarity can significantly compound the difficulty of finding a solution. Moreover, for points already within the feasible region, the constrained projection method does not modify their coordinates, hence offering no performance improvement for these points. Conversely, our analysis of Problem \ref{p1} reveals that the optimal solution invariably lies on the boundary of the feasible region. Thus, the projection method introduced in this work not only shifts all feasible region points to the boundary but also enhances the overall solution performance by exploiting this boundary condition.

\subsection{Unsupervised Training Method}
To minimize the training cost, we derive the gradient of the objective value of Problem 2 with respect to the parameters $\bm{\theta}$ of the neural network (NN), facilitating a label-free unsupervised training approach. First, we consider the gradient of the objective value with respect to the projected solution $t\bm{w}$, given by:
\begin{align}
\frac{\partial \|t\bm{w}\|^2}{\partial t\bm{w}} = 2t\bm{w}.
\end{align}
Since $t$ is a function of $\bm{w}$, we express:
\begin{align}
\frac{\partial t\bm{w}}{\partial \bm{w}} = t + \bm{w}\frac{\partial t}{\partial \bm{w}}.
\end{align}
Analyzing the gradient of $t$ with respect to $\bm{w}$, where $t$ is defined as $\max_{i \in \{1, \ldots, M\}} \sqrt{\frac{\gamma_i \sigma_i^2}{\|\bm{h}_i^H \bm{w}\|^2}}$, involves calculating the gradient of:
\begin{align}
\|\bm{h}_k^H \bm{w}\|^2 = (\bm{h}_k^H \bm{w})(\bm{h}_k^H \bm{w}) = \bm{w}^H \bm{H}_k \bm{w},
\end{align}
where $\bm{H}_k = \bm{h}_k \bm{h}_k^H$ is a Hermitian matrix formed by the outer product of $\bm{h}_k$ with itself, and $k = \arg \max_{i \in \{1, \ldots, M\}} \sqrt{\frac{\gamma_i \sigma_i^2}{\|\bm{h}_i^H \bm{w}\|^2}}$. Applying the chain rule and quotient rule, we derive the following equations.
\begin{align}
\frac{\partial\sqrt{\frac{\gamma_k \sigma_k^2}{\bm{w}^H \bm{H}_k \bm{w}}}}{\partial \bm{w}} = -\frac{1}{2} \left(\frac{\gamma_k \sigma_k^2}{(\bm{w}^H \bm{H}_k \bm{w})^{3/2}}\right) \frac{\partial \bm{w}^H \bm{H}_k \bm{w}}{\partial \bm{w}}.
\end{align}
For a quadratic form, the gradient is:
\begin{align}
\frac{\partial \bm{w}^H \bm{H}_k \bm{w}}{\partial \bm{w}} = 2 \bm{H}_k \bm{w}.
\end{align}
Substituting back, we obtain
\begin{align}
\frac{\partial t}{\partial \bm{w}}  = -\frac{\gamma_k \sigma_k^2 \bm{H}_k \bm{w}}{(\bm{w}^H \bm{H}_k \bm{w})^{3/2}}.
\end{align}
Thus, the derivative $\frac{\partial t\bm{w}}{\partial \bm{w}}$ is:
\begin{align}
&\frac{\partial t\bm{w}}{\partial \bm{w}} = -\frac{\gamma_k \sigma_k^2 \bm{H}_k \bm{w}}{(\bm{w}^H \bm{H}_k \bm{w})^{3/2}} + \max_{i \in \{1, \ldots, M\}} \sqrt{\frac{\gamma_i \sigma_i^2}{\|\bm{h}_i^H \bm{w}\|^2}},\\
&k = \arg \max_{i \in \{1, \ldots, M\}} \sqrt{\frac{\gamma_i \sigma_i^2}{\|\bm{h}_i^H \bm{w}\|^2}}
\end{align}
Accordingly, the parameter $\bm{\theta}$ can be updated using the gradient:
\begin{align}
\frac{\partial \|t\bm{w}\|^2}{\partial \bm{\theta}} = 2\frac{\partial t\bm{w}}{\partial \bm{w}}\bigg|_{\bm{w}=NN(h_1,\cdots,h_M;\bm{\theta})}\frac{\partial \bm{w}}{\partial \bm{\theta}}.\label{gradient}
\end{align}
Therefore, the NN can update the parameters by equation \eqref{gradient}.

\section{System Model Validation and Performance Evaluation with Simulation}
In this section, we evaluated the performance of our model in beamforming tasks through simulation tests, and compared it with traditional methods.

\subsection*{A. Individual Dataset Direct Execution}

Let's conduct unsupervised learning optimization simulations on a single dataset. The channels \( h_i \)'s are randomly and independently generated from the absolute value of a Gaussian distribution, expressed as \( |h_i| \sim \mathcal{N}(0, 1) \); the variance of the noise, $\sigma$, is set to one; the data were obtained from ten repetitive randomized experiments. The computations were performed on a system running Ubuntu 22.04, equipped with an Intel(R) Xeon(R) Silver 4214R CPU @ 2.40GHz and an NVIDIA RTX 3090 GPU. The theoretical lower bound is calculated via SDR, and feasible solutions are obtained by $10^5$ Gaussian randomizations. We have provided three models using the Reliable Projection Based Unsupervised Learning method. The NN model consists of three fully connected layers. Each layer is followed by a ReLU activation function during the forward pass.

We provide three models using the Reliable Projection Based Unsupervised Learning method.

\textbf{Model 1:} The model utilizes the projection method of function \ref{projection-1}, which only projects infeasible solutions, to optimize the neural network through unsupervised learning.

\textbf{Model 2:} The model utilizes the projection method of function \ref{projection-2}, which projects both feasible and feasible solutions, to optimize the neural network through unsupervised learning.

\textbf{Model 3:} The model first utilizes the penalty function method to pretrain the neural network, followed by employing the projection method from Equation \ref{projection-2} to project the solution into the feasible domain. During the pre-training phase, the loss function is defined as:
\begin{equation}
\text{Loss} = \left\| \mathbf{\omega} \right\|^2 - 0.2 \max \left\{0, \gamma_i - \frac{ \left\| h_i ^ H \mathbf{\omega} \right\|^2}{\sigma_i^2}\right\}
\end{equation}

All three models are trained using the Adam optimizer with an initial learning rate of 0.0005. Model 1 and Model 2 have a maximum of 1000 epochs for training, while Model 3 undergoes pre-training for 100 epochs followed by 900 epochs of training. 

We conducted performance testing of the model by varying $M$, $N$ and $\gamma_i$ individually.

\begin{enumerate}
    \item Varying $M$: We set $N$ as 50, and $\gamma_1 = \ldots = \gamma_M = \gamma = 10$ dB. We consider different values of $M$, namely 40, 80, 120, 160, and 200. The simulation results are depicted in Fig. \ref{fig:sqrt_user}.  The average loss for each model is depicted as bars, with accompanying black lines delineating the upper and lower performance bounds across different values of \( M \). From the figures, it is evident that both Model 2 and Model 3 significantly outperform Model 1 and SDR with randomization in terms of average losses and training time. Model 1, which only projects infeasible solutions, exhibits weaker convergence compared to Models 2 and 3. Moreover, as the parameter \( M \) increases, signifying a larger problem size, the Average Losses gradually increase. However, the impact on training time for projection-based methods is less pronounced than for SDR, suggesting a more scalable approach in terms of computational efficiency.
    
    \begin{figure}
        \centering
        \includegraphics[width=0.8\linewidth]{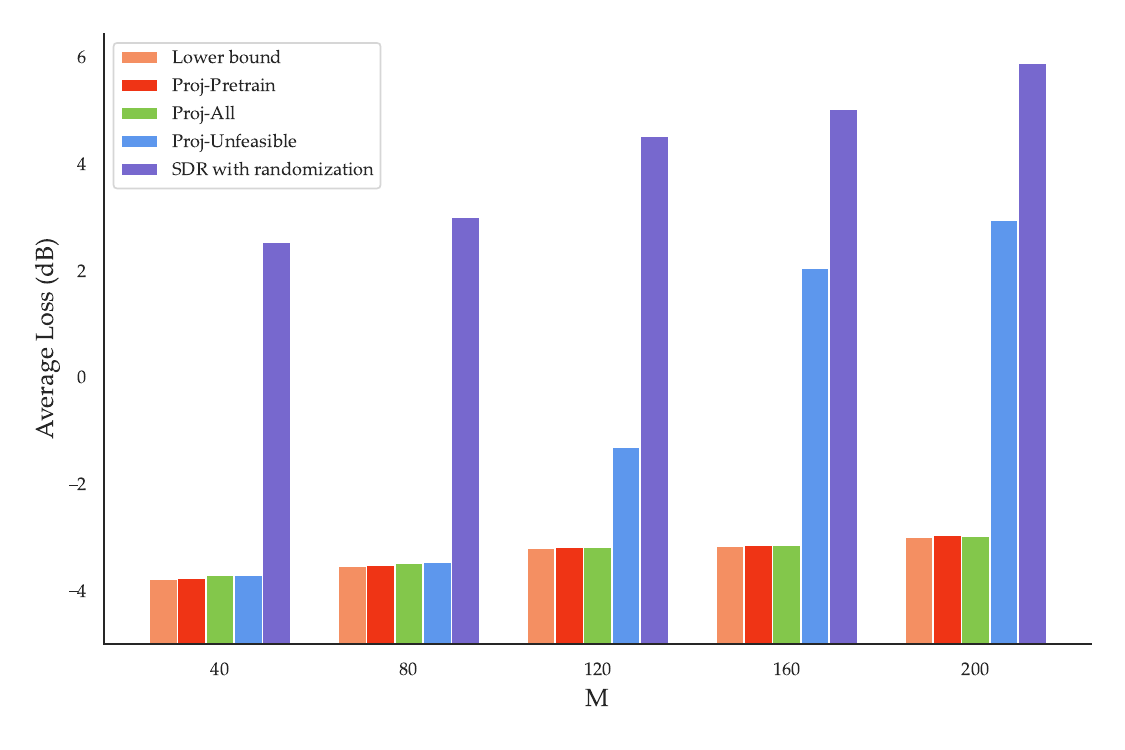}
        \caption{Average Losses with Changing $M$}
        \label{fig:sqrt_user}
    \end{figure}

    % \begin{figure}
    %     \centering
    %     \includegraphics[width=0.9\linewidth]{fig/2 sqrt_user_time.png}
    %     \caption{Average training Time with Changing $M$}
    %     \label{fig:sqrt_user_time}
    % \end{figure}

    \begin{table}[h!]
    \centering
    \begin{tabular}{@{}lccccc@{}}
    \toprule
    M & 40 & 80 & 120 & 160 & 200 \\ \midrule
    SDR & 13.91 & 25.4102 & 38.5988 & 55.2094 & 63.4730 \\
    Proj-All & 1.6588 & 1.8064 & 1.7182 & 1.7892 & 1.8158 \\
    Proj-Unfeasible & 4.0894 & 4.8726 & 4.8836 & 4.8522 & 3.9776 \\
    Proj-Pretrain & 3.6954 & 3.9906 & 4.3228 & 3.7868 & 3.6310 \\ \bottomrule
    \end{tabular}
    \caption{Average Training Time with Changing M}
    \label{table:average_time_M}
    \end{table}
    
    \item Varying $N$: We keep $M = 80$ and $\gamma_1 = \ldots = \gamma_M = \gamma = 10$ dB, while varying $N$ with values of 10, 30, 50, 70, and 90 for experimentation. The results are depicted in Fig. \ref{fig:sqrt_N}. With increasing $N$, the problem size increases, but the difficulty of solving decreases. The average loss increases, the training time of the SDR increases, and the time of the projection-based methods decreases.
    
    \begin{figure}
        \centering
        \includegraphics[width=0.8\linewidth]{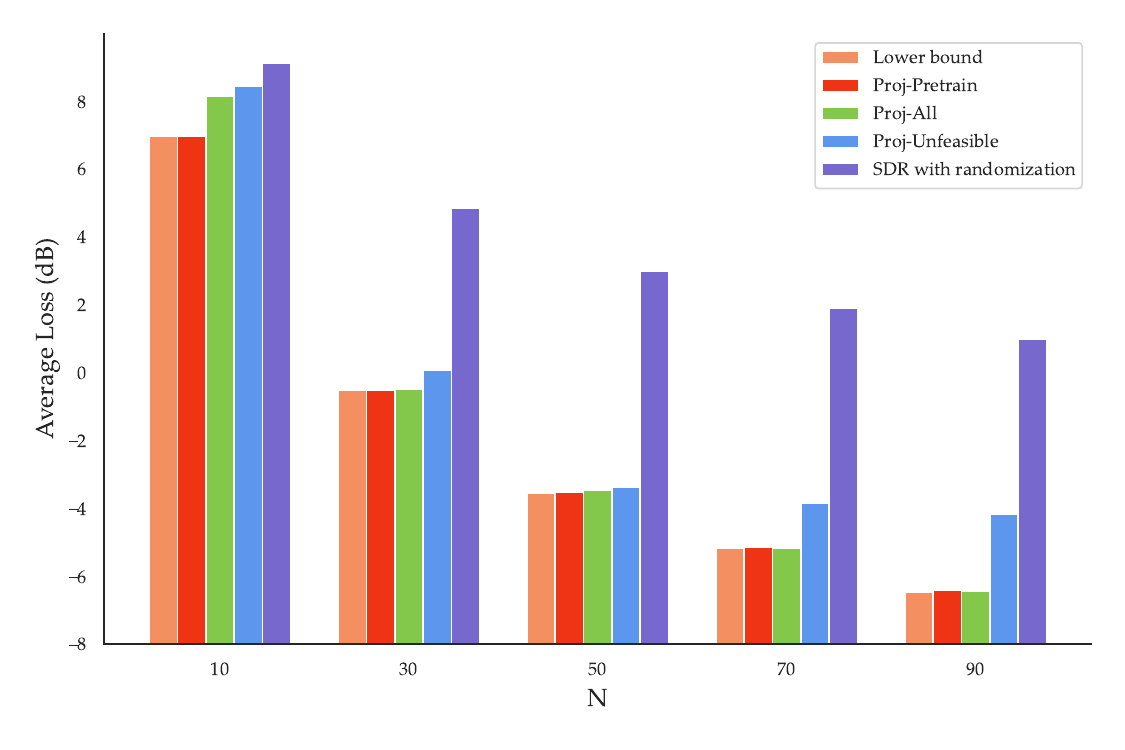}
        \caption{Average Losses with Changing $N$}
        \label{fig:sqrt_N}
    \end{figure}
    % \begin{figure}
    %     \centering
    %     \includegraphics[width=0.9\linewidth]{fig/4 sqrt_N_time.png}
    %     \caption{Average training Time with Changing $N$}
    %     \label{fig:sqrt_N_time}
    % \end{figure}

    \begin{table}[h!]
    \centering
    \begin{tabular}{@{}lccccc@{}}
    \toprule
    N & 10 & 30 & 50 & 70 & 90 \\ \midrule
    SDR & 9.843 & 16.489 & 24.737 & 42.002 & 58.660 \\
    Proj-All & 7.92 & 4.13 & 1.869 & 1.835 & 1.693 \\
    Proj-Unfeasible & 29.909 & 23.936 & 15.023 & 15.006 & 9.314 \\
    Proj-Pretrain & 4.895 & 5.324 & 4.754 & 2.725 & 2.709 \\
    \bottomrule
    \end{tabular}
    \caption{Average Training Time with Changing N}
    \label{table:average_time_N}
    \end{table}
    
    \item Varying $\gamma_i$: We set $N = 50$ and $M = 80$, while varying $\gamma_1 = \ldots = \gamma_M = \gamma$ with values of 5dB, 10dB, 15dB, 20dB, and 25dB for experimentation. The results are shown in the figures. It is observed that with increasing $\gamma$, the problem complexity increases, leading to higher average losses. Due to its inherently difficult convergence nature, Model 2 often requires more time to converge under high $\gamma_i$.

    \begin{figure}
        \centering
        \includegraphics[width=0.8\linewidth]{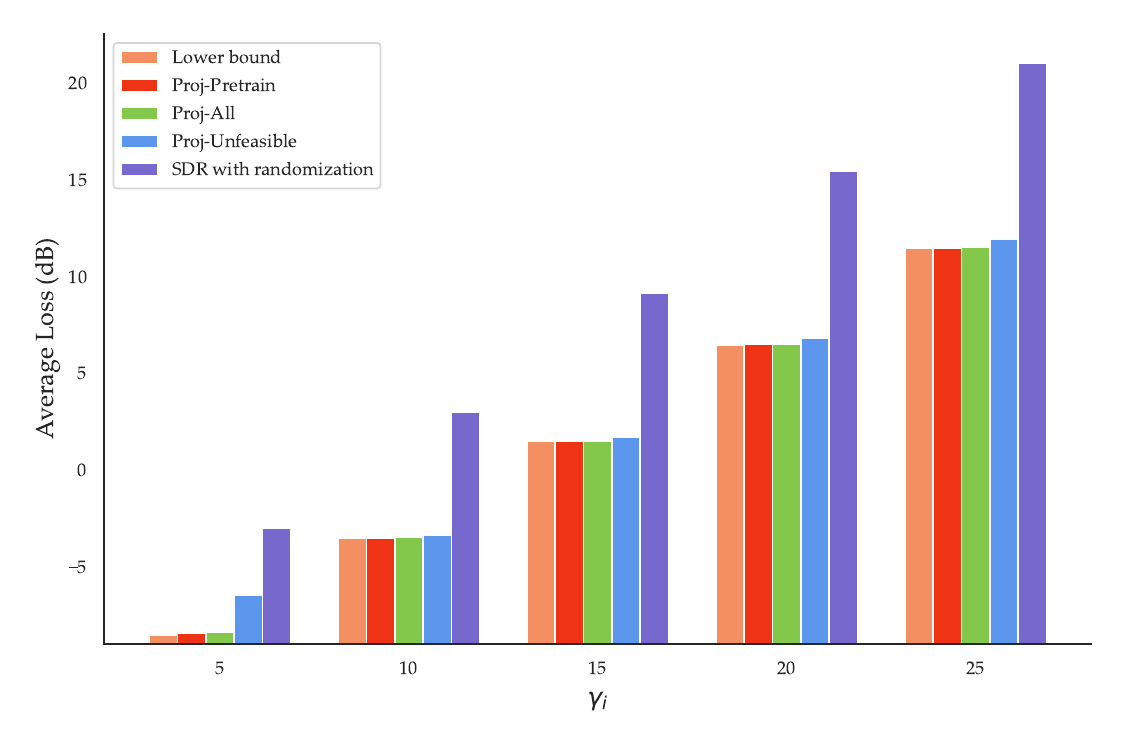}
        \caption{Average Losses with Changing $\gamma_i$}
        \label{fig:sqrt_gamma}
    \end{figure}
    
    \begin{table}[h!]
    \centering
    \begin{tabular}{@{}lccccc@{}}
    \toprule
    $\gamma_i$ & 5 & 10 & 15 & 20 & 25 \\ \midrule
    SDR & 30.223 & 27.633 & 28.758 & 27.889 & 28.495 \\
    Proj-All & 1.663 & 1.868 & 5.129 & 4.823 & 4.891 \\
    Proj-Unfeasible & 9.727 & 17.247 & 27.304 & 25.828 & 29.350 \\
    Proj-Pretrain & 2.282 & 5.277 & 5.125 & 5.050 & 5.116 \\ \bottomrule
    \end{tabular}
    \caption{Average Training Time with Changing $\gamma_i$}
    \label{table:average_time_gamma}
    \end{table}

\end{enumerate}

\subsection*{B. Training on Large-Scale Datasets}
We generated data for 10,000 instances of $h_i$ and split them into training and testing sets with an 8:2 ratio. Similarly, following the setup from previous simulations, all three models are trained using the Adam optimizer with an initial learning rate of 0.001, and the maximum training epochs are consistent with the previous experiments. The training process halts when the algorithm achieves $|w_{k+1} - w_k|^2_2 \leq 10^{-3}$ for 10 consecutive epochs. Additionally, during the training process, the test loss is recorded, and a batch size of 500 is employed.

\begin{enumerate}
    \item Varying $M$: Likewise, we retain the same configurations as previously described, with $N$ fixed at 50 and $\gamma_1 = \ldots = \gamma_M = \gamma = 10$ dB. We continue to examine various values of $M$, including 40, 80, 120, 160, and 200. From Fig. \ref{fig:batch_user}, it can be observed that the loss slightly increases with larger-scale datasets compared to single datasets, and the running time also increases. Meanwhile, the characteristic of Model 2 being difficult to converge is further magnified in larger-scale datasets.

    \begin{figure}
        \centering
        \includegraphics[width=0.8\linewidth]{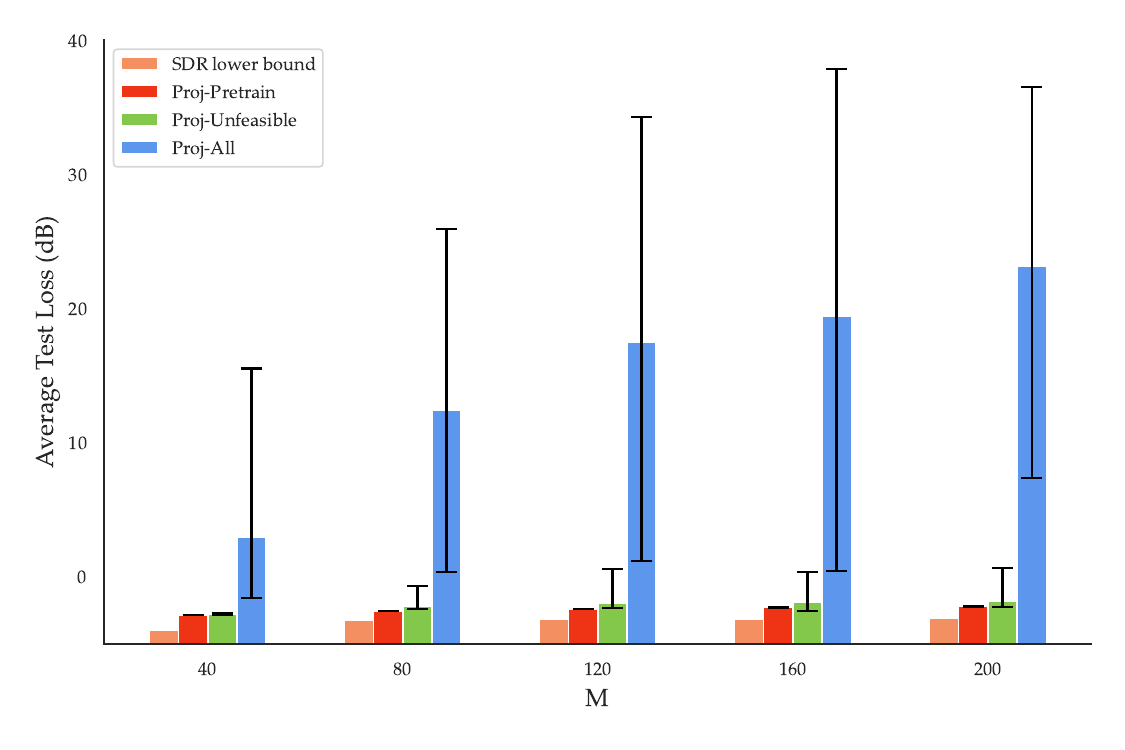}
        \caption{Average Test Losses with Changing $M$}
        \label{fig:batch_user}
    \end{figure}

    \item Varying $N$: We maintain the same parameter settings as in the previous experiments, with $M = 80$ and $\gamma_1 = \ldots = \gamma_M = \gamma = 10$ dB. The outcomes of these experiments are illustrated in Fig. \ref{fig:batch_N}.

    \begin{figure}
        \centering
        \includegraphics[width=0.8\linewidth]{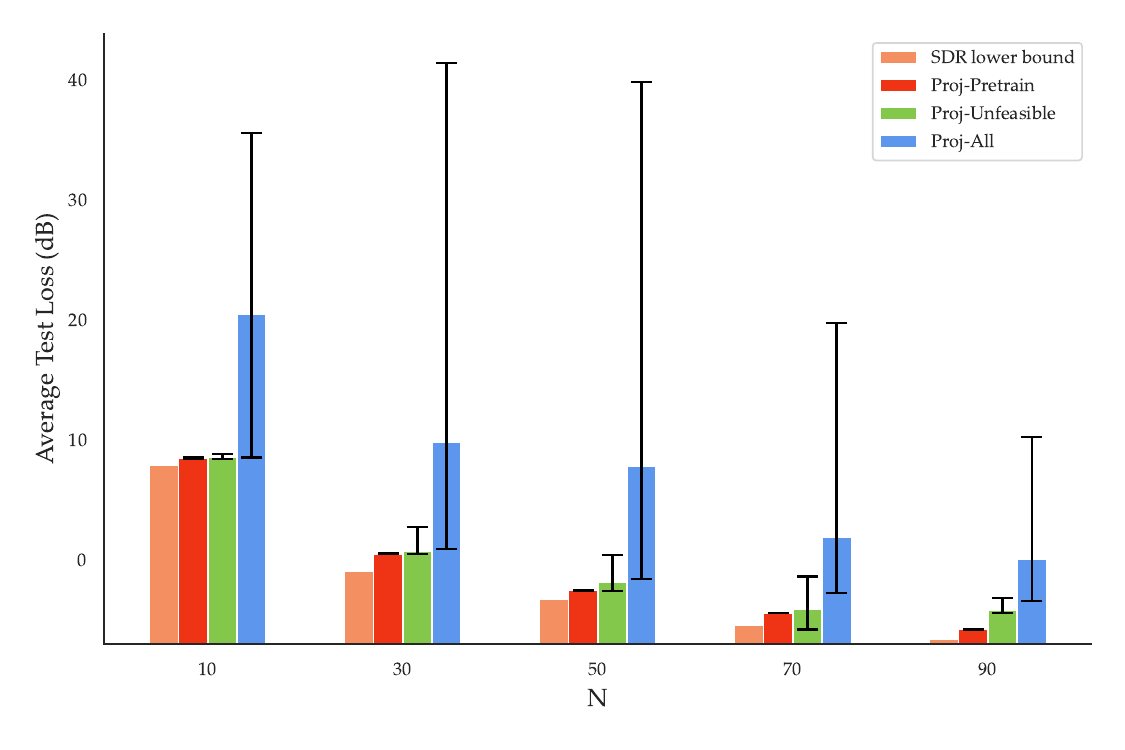}
        \caption{Average Test Losses with Changing $N$}
        \label{fig:batch_N}
    \end{figure}

    \item Varying $\gamma_i$: We establish $N = 50$ and $M = 80$, while exploring a range of $\gamma$ values from 5dB to 25dB in increments of 5dB for our experimentation. The results are illustrated in Fig. \ref{fig:batch_gamma}.

    \begin{figure}
        \centering
        \includegraphics[width=0.8\linewidth]{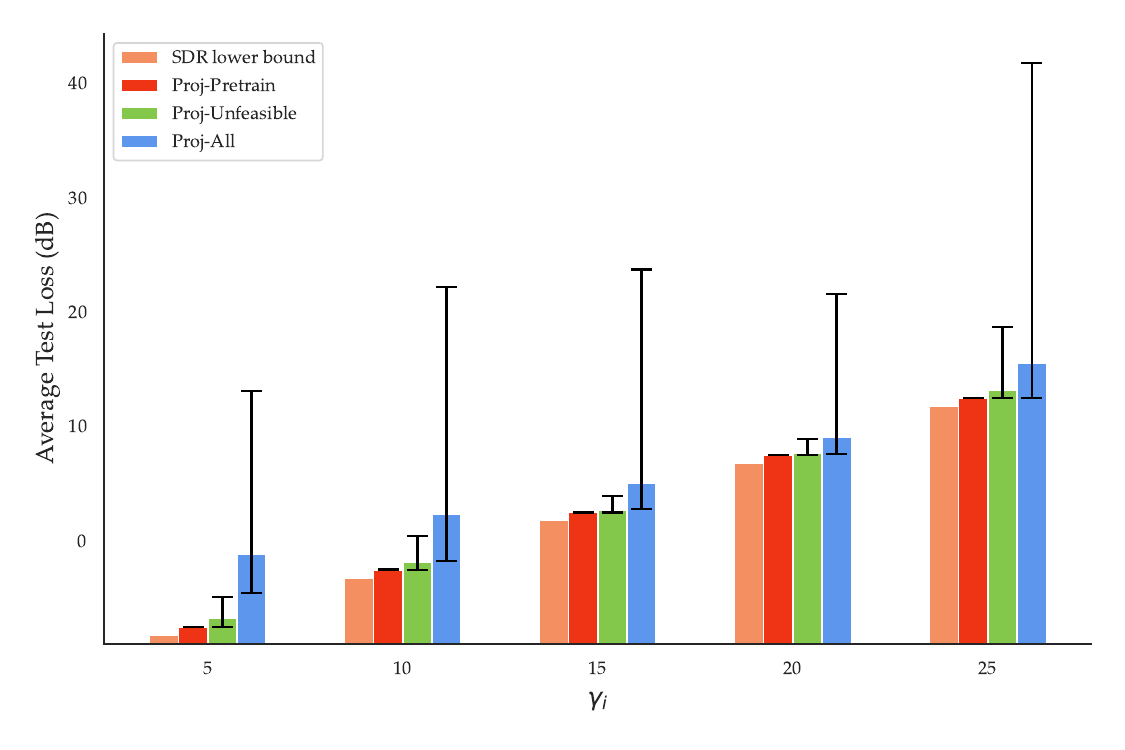}
        \caption{Average Test Losses with Changing $\gamma_i$}
        \label{fig:batch_gamma}
    \end{figure}

\end{enumerate}

\subsection*{C. Model Deployment Strategies}

\begin{figure}
    \centering
    \begin{minipage}[b]{0.45\textwidth}
        \centering
        \includegraphics[width=\textwidth]{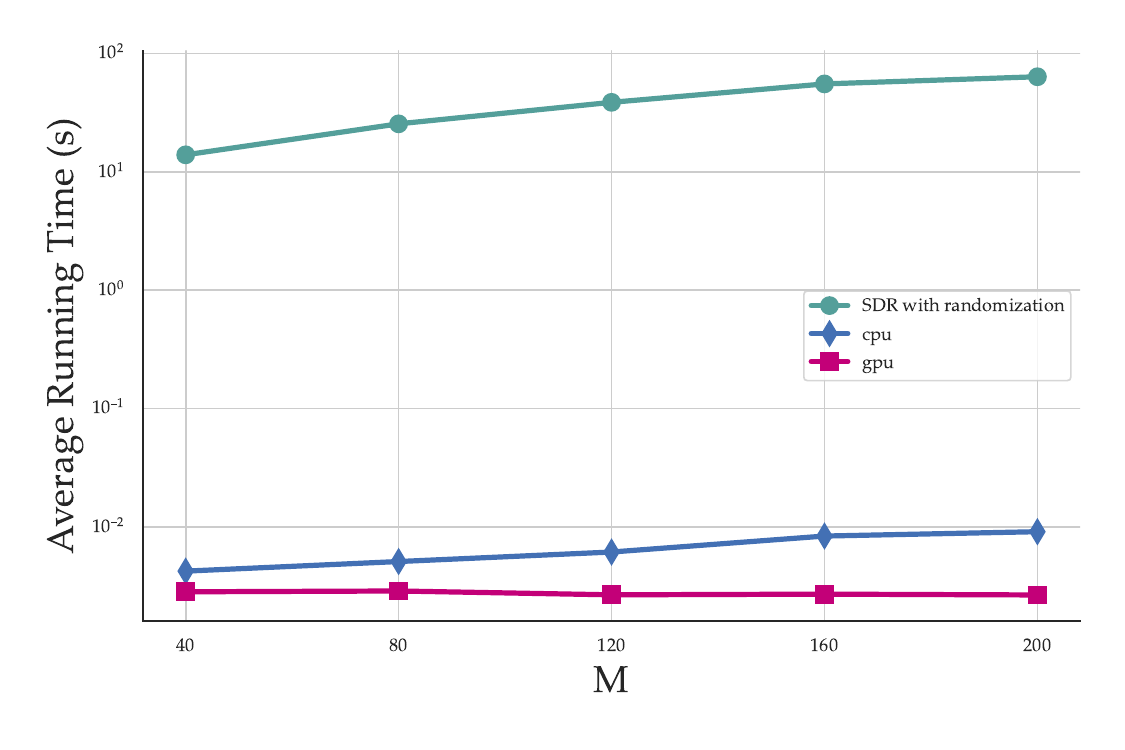}
        \label{fig:time_M.png}
    \end{minipage}
    \hspace{0.05\textwidth}
    \begin{minipage}[b]{0.45\textwidth}
        \centering
        \includegraphics[width=\textwidth]{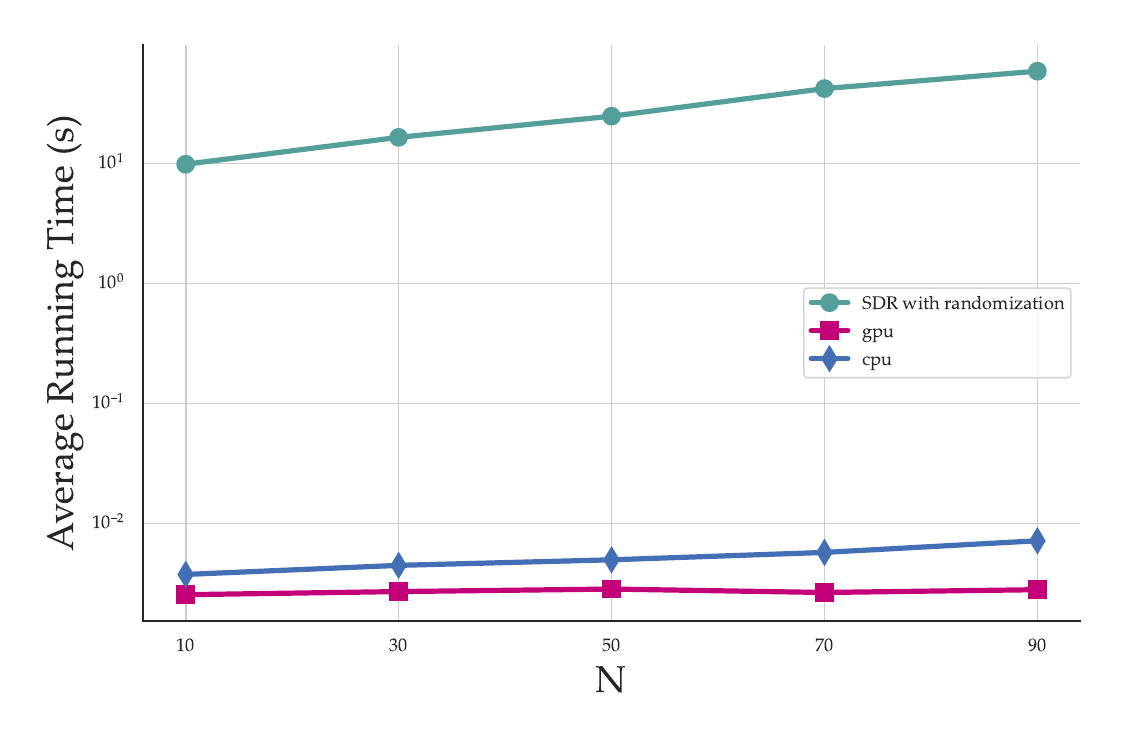}
        \label{fig:time_N.png}
    \end{minipage}
    \hspace{0.05\textwidth}
    \begin{minipage}[b]{0.45\textwidth}
        \centering
        \includegraphics[width=\textwidth]{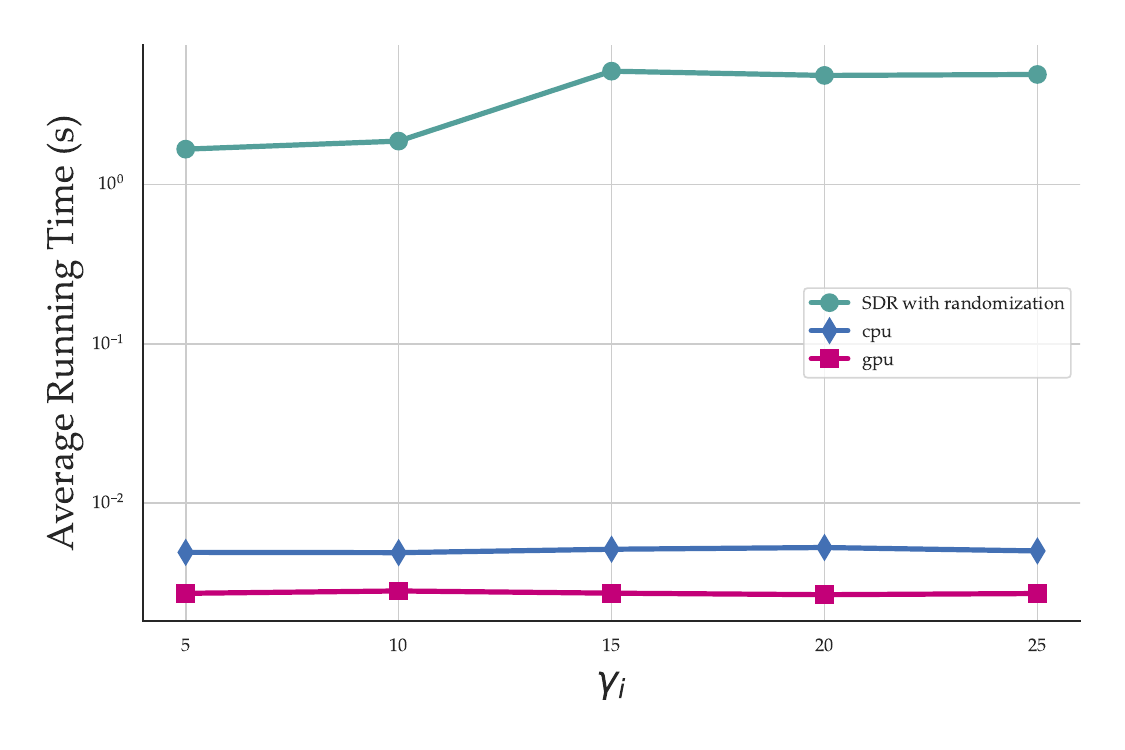}
        \label{fig:time_gamma.png}
    \end{minipage}
    \caption{Caption for the whole figure.}
    \label{fig:Comparison of Deployment Time}
\end{figure}

In this section, we examine the time cost associated with deploying the model in practical applications. We fine-tune pre-trained models using large-scale datasets for inference in real-world scenarios. Employing function \ref{projection-2}, we conduct inference on the new single dataset $h_i$ using the trained model, while varying the values of $M$, $N$, and $\gamma_i$. . From the figures, it is evident that projection-based learning optimization methods significantly outperform traditional SDR. They exhibit excellent timeliness and flexibility in practical applications.

\bibliography{ref}
\bibliographystyle{IEEEtran}

\ifCLASSOPTIONcaptionsoff
  \newpage
\fi
\end{document}